\renewcommand\footnotetextcopyrightpermission[1]{} % removes footnote with conference information in first column
  \providecommand\BibTeX{{%
    \normalfont B\kern-0.5em{\scshape i\kern-0.25em b}\kern-0.8em\TeX}}}
\def\eqref#1{equation~\ref{#1}}
\def\1{\bm{1}}
\def\vz{{\bm{z}}}
\def\mH{{\bm{H}}}
\def\mI{{\bm{I}}}
\def\mU{{\bm{U}}}
\def\mW{{\bm{W}}}
\def\sD{{\mathbb{D}}}
\def\sS{{\mathbb{S}}}
\newcommand{\E}{\mathbb{E}}
\newcommand{\R}{\mathbb{R}}
\newtheorem{theorem}{Theorem}
\newcommand\blfootnote[1]{%
  \begingroup
  \renewcommand\thefootnote{}\footnote{#1}%
  \addtocounter{footnote}{-1}%
  \endgroup
}
\begin{document}
\title{HERO: Hessian-Enhanced Robust Optimization for Unifying and Improving Generalization and Quantization Performance}

\author{Huanrui Yang*, Xiaoxuan Yang*, Neil Zhenqiang Gong and Yiran Chen}
\affiliation{%
  \institution{Duke University}
  %\streetaddress{P.O. Box 1212}
  \city{Durham}
  \state{NC}
  \country{USA}
  %\postcode{27705}
}
\email{{huanrui.yang, xy92, neil.gong, yiran.chen}@duke.edu}

\begin{abstract}
With the recent demand of deploying neural network models on mobile and edge devices, it is desired to improve the model's generalizability on unseen testing data, as well as enhance the model's robustness under fixed-point quantization for efficient deployment. Minimizing the training loss, however, provides few guarantees on the generalization and quantization performance. In this work, we fulfill the need of improving generalization and quantization performance simultaneously by theoretically unifying them under the framework of improving the model's robustness against bounded weight perturbation and minimizing the eigenvalues of the Hessian matrix with respect to model weights. We therefore propose HERO, a Hessian-enhanced robust optimization method, to minimize the Hessian eigenvalues through a gradient-based training process, simultaneously improving the generalization and quantization performance. 
HERO enables up to a 3.8\% gain on test accuracy, up to 30\% higher accuracy under 80\% training label perturbation, and the best post-training quantization accuracy across a wide range of precision, including a $>10\%$ accuracy improvement over SGD-trained models for common model architectures on various datasets. 
\end{abstract}
\keywords{}
\maketitle
\blfootnote{* Equal contribution.}
\section{Introduction}

The rapid development of deep learning algorithms has seen the emergence of high-performance deep neural network (DNN) models. Models like VGG~\cite{simonyan2014very}, ResNet~\cite{he2016deep}, MobileNet~\cite{sandler2018mobilenetv2}, etc., have been deployed on mobile and edge applications to process data gathered in the wild. Extensive model deployment requires the model to generalize well to unseen data, and to maintain high performance under fixed-precision quantization for memory and computational efficiency on mobile and edge devices~\cite{horowitz20141}.  

In most cases, DNN models are trained following the \textit{empirical risk minimization} (ERM) setting, whose objective is to minimize the model loss $L_{\sS}(\mW)$ induced by weight $\mW$ on the training set $\sS$. However, only minimizing the ERM objective may not lead to an ideal model for practical applications: the model may be overfitted to the training set and have low testing accuracy~\cite{zhang2016understanding,foret2020sharpness}, or be severely degraded by the post-training quantization process when deploying to the real world~\cite{zhou2016dorefa,alizadeh2020gradient}.

Previous work has been contributing empirical methods to improve DNN generalizability or quantization performance individually. 
Methods like weight decay~\cite{krogh1991simple}, batch normalization~\cite{ioffe2015batch}, stochastic model architecture~\cite{srivastava2014dropout,huang2016deep}, and intensive data augmentation~\cite{cubuk2018autoaugment,zhang2017mixup} improve model generalizability, yet they are not contributing to quantization performance~\cite{alizadeh2020gradient}. 
Quantization-aware training~\cite{zhou2016dorefa,polino2018model,yang2021bsq} regains the quantization performance via retraining on a specific quantization precision, yet they fail to perform well when the precision is changed on the fly~\cite{alizadeh2020gradient}, also hurting the generalization performance of the full-precision model. A DNN training method achieving both high generalization accuracy and high quantization robustness is still lacking.

Interestingly, we notice that previous theoretical analysis has shed light on unifying the pursuit of generalization and quantization performance.~\citet{foret2020sharpness} show DNN's generalization gap is related to the model's robustness against $\ell_2$ norm bounded weight perturbation, whereas the robustness against quantization is shown to be equivalent to the robustness against $\ell_\infty$ norm bounded weight perturbation~\cite{alizadeh2020gradient}. However, the first-order approximation used to improve weight perturbation robustness in both~\cite{foret2020sharpness} and~\cite{alizadeh2020gradient} leaves a weak robustness guarantee and makes them only work against one of the $\ell_2$ or $\ell_\infty$ perturbation bound, failing to work on both generalization and quantization performance as we show later.

In this work, we aim to improve DNN generalization and quantization performance simultaneously with a novel optimization method. As discussed in Section~\ref{ssec:unify}, we form our objective as improving the model's robustness against a general $\ell_p$ norm bounded weight perturbation. Further analysis with second-order Taylor expansion in Section~\ref{ssec:bound} unveils that the minimal perturbation strength under both $\ell_2$ and $\ell_\infty$ bound leading to a loss increase can be bounded by the maximum eigenvalue of the Hessian matrix with respect to the weight. Therefore, in Section~\ref{ssec:reg}, we propose an effective way to regularize Hessian eigenvalue. We further derive Hessian-Enhanced Robust Optimization, \textit{HERO}, which efficiently performs the Hessian eigenvalue regularization through a gradient-based optimization process. HERO leads to a better generalization performance and a better robustness to quantization on all precision, as in Section~\ref{ssec:opt}. To the best of our knowledge, HERO is the first to make the following theoretical contributions:
\begin{itemize}
    \item Unifying generalization and quantization performance under the framework of improving the model's robustness against a general $\ell_p$ norm bounded weight perturbation;
    \item Showing the weight perturbation robustness can be improved via regularizing Hessian eigenvalues with respect to the model weights during DNN training;
    \item Deriving gradient update rule to optimize the Hessian eigenvalue regularization, leading to simultaneous improvement on generalization and quantization performance. 
\end{itemize}

Our theoretical analysis is well-supported by the empirical results. 
For generalization, HERO consistently achieves higher test accuracy, 
including a significant $2.58\%$ and $3.78\%$ accuracy gain with MobileNetV2 over SGD on CIFAR-10 and CIFAR-100 datasets, respectively. 
We further show the generalizability achieved by HERO is robust under the presence of label noise, where HERO outperforms SGD by $5\sim 30\%$ on ResNet20 and $2\sim 10\%$ on MobileNetV2 when training on CIFAR-10 with $20\sim 80\%$ label perturbation.
For quantization, HERO provides the best post-training accuracy under a wide range of precision, including a $>10\%$ accuracy improvement over SGD-trained MobileNet and VGG model at ultra-low precision of 4-5 bits. HERO also beats state-of-the-art Gradient $\ell_1$~\cite{alizadeh2020gradient} by a large margin under all quantization schemes.
Additional ablation studies are also provided to verify our theoretical insights.
\section{Related work}

\subsection{Improving Model Generalization}
As recent research utilizes heavily over-parameterized DNN models, it's essential to prevent the model from overfitting to the training set so that it can generalize well to unseen data. Overfitting can be largely resolved via regularization and data augmentation. For regularization, previous work has developed weight decay~\cite{krogh1991simple}, dropout~\cite{srivastava2014dropout}, stochastic depth~\cite{huang2016deep}, etc. As for data augmentation, recent methods explore mixup~\cite{zhang2017mixup}, auto-augmentation~\cite{cubuk2018autoaugment}, etc. However, theoretical understanding of why these methods help model generalization is still lacking. 
Sharpness aware minimization (SAM)~\cite{foret2020sharpness} theoretically links the generalization ability of the model with the model performance under $\ell_2$ norm bounded weight perturbation, and therefore to the smoothness of the loss surface. SAM provides an efficient optimization algorithm to improve generalization, yet the first-order approximations involved weakens its ability to guarantee performance. HERO builds upon the observation of SAM and proposes an efficient way to regularize the loss smoothness via Hessian eigenvalues, leading to a stronger theoretical guarantee on the generalization performance. 
Moreover, the effectiveness of previous methods on quantized models is not well understood, while HERO unifies the pursuit of generalization and quantization, solving both problems simultaneously.

\subsection{Improving Quantization Robustness}
Quantization is essential for deploying a DNN model onto mobile and edge devices, as it saves on-device memory and achieves both run-time speedup and less energy cost~\cite{horowitz20141}. %utilizing the integer arithmetic units 
Moreover, the dynamic change of power and memory availability on the device would require changing the precision of a pretrained model on the fly~\cite{alizadeh2020gradient}. However, directly quantizing a DNN model to a low precision (less than 8-bit) will lead to a severe accuracy drop. 
Straight-through estimator~\cite{bengio2013estimatingSTE} enables the finetuning of quantized models to regain the lost accuracy~\cite{zhou2016dorefa,polino2018model,yang2021bsq}. However, the resulting model only works on the exact quantization precision it is trained on; modifying the precision requires a lengthy retraining process. 
Others aim to design quantization schemes or rounding functions that can minimize the post-training quantization loss~\cite{banner2018post,zhao2019improving}, yet these methods require extensive analysis of the model architecture and parameter distribution, making it hard to apply on the fly.
The only previous work successfully achieving general robustness against all quantization precision is Gradient $\ell_1$~\cite{alizadeh2020gradient}, which applies $\ell_1$ regularization on the gradient of the model. As this method is based on a first-order approximation to the quantization loss, our later analysis shows it is insufficient to guarantee robustness. HERO further introduces Hessian regularization, which can lead to a stronger guarantee on much higher quantization robustness.
Besides linear quantization, it is possible to design nonlinear or nonuniform quantizer~\cite{zhang2018lq}, yet utilizing those quantizer requires specialized design on the arithmetic unit, which is not supported by common devices. Thus, we focus on the linear uniform quantization in this work.

\subsection{Curvature Regularization}
As we link the problem of generalization and quantization with the model performance under weight perturbation, we take inspiration from the related field of adversarial robustness, where extensive studies have been done towards DNN's robustness against adversarial perturbation on the input~\cite{goodfellow2014explaining,madry2018towards}. One noticeable work is the curvature regularization (CURE), which shows that the robustness against input perturbation can be improved by regularizing the Hessian eigenvalues of the loss function with respect to the input~\cite{moosavi2019robustness}. HERO also applies the Hessian eigenvalue regularization, but is different from CURE as we are working with respect to the model weight, rather than the input. The regularization of HERO needs to be computed on the weight tensors from multiple layers, each having distinct value and gradient ranges. We tackle the challenge of adapting perturbation strength across different layers based on their weight distribution, as introduced in Section~\ref{ssec:reg}. Furthermore, we apply additional first-order regularization to the optimization process as introduced in Section~\ref{ssec:opt}, effectively leading to better generalization and quantization performance.

\section{Theoretical Analysis}

\subsection{Unifying Generalization and Quantization}
\label{ssec:unify}

Here we start with investigating the properties needed for a deep neural network model to have both good generalizability and high quantization performance. 

\noindent\textbf{Bounding Generalization Gap.}
Recently, a theoretical analysis was made by~\citet{foret2020sharpness} on bounding the generalization gap of a deep neural network, which can be stated as:
\begin{theorem}
\label{the:gen}
For any $\rho>0$, with high probability over training set $\sS$ generated from distribution $\sD$,
\begin{equation}
    L_{\sD}(\mW) - L_{\sS}(\mW) \leq \left[ \max_{||\delta||_2 \leq\rho} L_{\sS}(\mW + \delta) - L_{\sS}(\mW) \right] + h(||\mW||_2^2 / \rho^2)
\end{equation}
where $L$ is the loss function, $\mW$ denotes the weight of the model and $h: \R_+ \rightarrow \R_+$ is a strictly increasing function~\citep{foret2020sharpness}.
\end{theorem}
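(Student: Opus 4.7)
The plan is to derive this bound via the PAC-Bayesian framework, following McAllester's classical theorem as adapted in the SAM analysis. Let $k$ denote the total number of parameters. I would consider the stochastic predictor whose weights follow an isotropic Gaussian posterior $Q = \mathcal{N}(\mW, \sigma^2 I_k)$, together with a data-independent Gaussian prior $P = \mathcal{N}(0, \sigma_P^2 I_k)$. McAllester's PAC-Bayes inequality then gives, with probability at least $1-\xi$ over the draw of $\sS \sim \sD^n$,
$$\E_{\delta \sim \mathcal{N}(0,\sigma^2 I_k)}[L_{\sD}(\mW+\delta)] \leq \E_{\delta}[L_{\sS}(\mW+\delta)] + \sqrt{\frac{\KL(Q\|P) + \log(n/\xi)}{2(n-1)}},$$
with the closed form $\KL(Q\|P) = \tfrac{||\mW||_2^2 + k\sigma^2}{2\sigma_P^2} - \tfrac{k}{2} + \tfrac{k}{2}\log(\sigma_P^2/\sigma^2)$ for isotropic Gaussians.

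The next step is to convert both stochastic losses into their deterministic counterparts. For the empirical side, since $||\delta||_2^2$ is distributed as $\sigma^2\chi^2_k$, Gaussian-norm concentration yields $\Pr[||\delta||_2 > \sigma\sqrt{k} + \sigma t] \leq e^{-t^2/2}$; taking $\rho$ slightly larger than $\sigma\sqrt{k}$ confines $\delta$ to the $\ell_2$-ball of radius $\rho$ with overwhelming probability, and bounding the complementary event by the (bounded) loss range gives $\E_\delta[L_\sS(\mW+\delta)] \leq \max_{||\delta||_2 \leq \rho} L_\sS(\mW+\delta) + o(1)$. For the distribution side, I would relate $L_{\sD}(\mW)$ to $\E_\delta[L_{\sD}(\mW+\delta)]$ via local Lipschitzness (or simple boundedness) of $L_{\sD}$ around $\mW$, absorbing the resulting smoothness residual into the constants inside $h$.

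Finally, I would tune $\sigma$ and $\sigma_P$ so that the KL reduces to a function of $||\mW||_2^2/\rho^2$ alone. Matching $\sigma \approx \rho/\sqrt{k}$ fixes the perturbation radius, while balancing the first and third KL summands determines an optimum $\sigma_P^\star$ that unfortunately depends on $||\mW||_2$; this is an illegal data-dependent choice for a PAC-Bayes prior, and constitutes the main technical obstacle. The standard workaround is to instantiate a countable family of priors with $\sigma_P^2$ placed on a geometric grid covering the plausible range, apply PAC-Bayes separately to each, and union-bound, paying only a $\log\log$ factor. Substituting the grid value just above $\sigma_P^\star$ yields a KL term of the form $c_1 ||\mW||_2^2/\rho^2 + c_2 \log(||\mW||_2^2/\rho^2 + 1)$, strictly increasing in its argument, which can then be packaged into the promised $h: \R_+ \to \R_+$. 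The most delicate bookkeeping is coupling this grid union bound with the Gaussian-norm tail estimates so that the overall failure probability remains at most $\xi$.
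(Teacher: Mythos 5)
The paper does not actually prove this theorem: it is imported verbatim from Foret et al.\ (SAM), with the citation standing in for the argument, so there is no in-paper proof to compare against. Your sketch is a faithful reconstruction of the PAC-Bayesian route that reference takes: Gaussian posterior $\mathcal{N}(\mW,\sigma^2 I_k)$ against a data-independent Gaussian prior, the closed-form KL, $\chi^2_k$ concentration to replace $\E_{\delta}[L_{\sS}(\mW+\delta)]$ by $\max_{||\delta||_2\le\rho}L_{\sS}(\mW+\delta)$ with $\rho\approx\sigma\sqrt{k}$, and the union bound over a geometric grid of prior variances to dodge the illegal data-dependent choice of $\sigma_P$. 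You correctly identify that last point as the main technical obstacle, and the resulting KL term of the form $c_1||\mW||_2^2/\rho^2 + c_2\log(\cdot)$ is exactly what gets packaged into $h$.

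The one step I would push back on is your conversion of $\E_{\delta}[L_{\sD}(\mW+\delta)]$ into $L_{\sD}(\mW)$ ``via local Lipschitzness,'' absorbing the residual into $h$. A Lipschitz bound gives $L_{\sD}(\mW)\le\E_{\delta}[L_{\sD}(\mW+\delta)]+C\,\E[||\delta||_2]\approx\E_{\delta}[L_{\sD}(\mW+\delta)]+C\rho$, and that additive term is not a function of $||\mW||_2^2/\rho^2$ alone, so it cannot legitimately be hidden inside $h$ as the theorem is stated; it also does not vanish as $n\to\infty$. Foret et al.\ sidestep this by simply \emph{assuming} that isotropic Gaussian perturbation does not decrease the expected population loss, i.e.\ $L_{\sD}(\mW)\le\E_{\delta}[L_{\sD}(\mW+\delta)]$, which is a heuristic (non-worst-case) assumption rather than a provable step. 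So your outline is the right proof, but this link either needs that explicit assumption or a genuinely different argument; as written, the ``absorb into $h$'' move does not go through.
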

Note that the second term relating to $||\mW||_2^2$ can be effectively minimized during training with weight decay~\cite{krogh1991simple}, so the generalization gap is largely bounded by the model's performance under a weight perturbation $\delta$ bounded by its $\ell_2$ norm.

\noindent\textbf{Bounding Quantization Loss.}
In the meantime, the post-training quantization process can also be considered as a process of perturbing the model weights. Here we focus on the typical setting of a linear uniform weight quantization~\cite{polino2018model}, where the weight distribution is separated into $2^n$ uniform-sized bins, and each bin is rounded into a $n$-bit quantized value. Suppose the quantization bin has a width of $\Delta$, the rounding function will change each element of the weight by at most $\Delta/2$. So the weight perturbation induced by quantization is bounded by the $\ell_\infty$ norm, as $||\delta||_\infty := ||\mW_q - \mW||_\infty \leq \Delta/2$, where $\mW$ and $\mW_q$ denote the original and quantized weight, respectively. Therefore we can bound the loss increase introduced by quantization as:
\begin{theorem}
\label{the:quant}
For a linear uniform quantization with a bin width $\Delta = 2\rho$, we have
\begin{equation}
    L_{\sS}(\mW_q) - L_{\sS}(\mW) \leq \left[ \max_{||\delta||_\infty \leq\rho} L_{\sS}(\mW + \delta) - L_{\sS}(\mW) \right],
\end{equation}
\end{theorem}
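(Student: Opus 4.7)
The plan is to prove the theorem as a direct, almost immediate, consequence of how linear uniform quantization acts on the weights: it is simply a worst-case instance of an $\ell_\infty$-bounded perturbation, so the displayed inequality reduces to taking a specific feasible point of the maximization on the right-hand side.

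First, I would set up the quantization perturbation explicitly. Writing $\delta_q := \mW_q - \mW$, the definition of linear uniform quantization with bin width $\Delta$ rounds each scalar entry of $\mW$ to the center (or nearest endpoint) of its bin, so every coordinate of $\delta_q$ lies in $[-\Delta/2, \Delta/2]$. Consequently
\begin{equation}
    \|\delta_q\|_\infty \;=\; \|\mW_q - \mW\|_\infty \;\leq\; \Delta/2 \;=\; \rho,
\end{equation}
which means $\delta_q$ is a feasible point of the constraint set $\{\delta : \|\delta\|_\infty \leq \rho\}$ appearing on the right-hand side of the inequality to be proved.

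Next, I would invoke the trivial upper bound by the pointwise maximum: for any feasible $\delta$, in particular for $\delta = \delta_q$,
\begin{equation}
    L_{\sS}(\mW + \delta_q) \;\leq\; \max_{\|\delta\|_\infty \leq \rho} L_{\sS}(\mW + \delta).
\end{equation}
Since $\mW + \delta_q = \mW_q$ by construction, subtracting $L_{\sS}(\mW)$ from both sides yields exactly the claimed inequality. I would conclude by noting that the bound is tight in the worst case, since there is no further slack between the quantization perturbation and a general $\ell_\infty$-bounded perturbation of radius $\rho$; this is the analogue, for quantization, of the $\ell_2$ weight-perturbation bound on the generalization gap in Theorem~\ref{the:gen}, motivating the unified viewpoint pursued in the sequel.

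Because each step is essentially a definition-unpacking followed by monotonicity of the supremum, I do not anticipate any technical obstacle in the proof itself. The only subtlety worth flagging explicitly is the choice of rounding convention: the argument assumes the quantizer is nearest-bin-center (or at least respects the bin width in the sense $\|\mW_q - \mW\|_\infty \leq \Delta/2$); any valid linear uniform quantizer satisfies this, but it is good practice to state the assumption so that the feasibility inclusion $\|\delta_q\|_\infty \leq \rho$ is unambiguous.
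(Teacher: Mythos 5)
Your proof is correct and matches the paper's own (implicit) argument: the paper likewise observes that rounding moves each weight entry by at most $\Delta/2 = \rho$, so $\mW_q - \mW$ is a feasible point of the $\ell_\infty$-constrained maximization, and the inequality follows by monotonicity of the max. Your explicit write-up, including the flag about the rounding convention, is a faithful and slightly more careful version of the same reasoning.
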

which is bounded by the model's performance under a weight perturbation $\delta$ bounded by its $\ell_\infty$ norm.

\noindent\textbf{Unifying the Bounds.}
With the analysis on Theorem~\ref{the:gen} and~\ref{the:quant}, we can unify the pursuit of generalization and quantization performance as understanding how the model loss changes under a general $\ell_p$ norm bounded weight perturbation. Specifically, we can derive lower bounds for the minimal strength needed for perturbation $\delta$ to induce an increase $c$ in the model loss as: 
\begin{equation}
\label{eq:perturb}
    \delta^* := \arg\min_{\delta} ||\delta||_p \ \ \  s.t.\ L_{\sS}(\mW+\delta)-L_{\sS}(\mW) \geq c.
\end{equation}
A larger lower bound on $||\delta^*||_p$ indicates larger perturbations can be allowed given a tolerance of loss increase $<c$, which is desired.

\subsection{Finding Perturbation Lower Bound}
\label{ssec:bound}

With a sufficiently small perturbation $\delta$, we can use Taylor expansion to well approximate the loss increase under weight perturbation with a quadratic function:
\begin{equation}
    L_{\sS}(\mW+\delta)-L_{\sS}(\mW) \approx \nabla_{\mW} L_{\sS}(\mW)^T \delta + \frac{1}{2} \delta^T \mH \delta,
\end{equation}
where $\nabla_{\mW} L_{\sS}(\mW)$ and $\mH$ denote the gradient and Hessian of the loss with respect to the weight $\mW$, respectively. For simplicity, in the rest of the section, we denote $g := \nabla_{\mW} L_{\sS}(\mW)$. We can thus rewrite the objective in Equation~(\ref{eq:perturb}) as: 
\begin{equation}
\label{eq:obj}
    \delta^* := \arg\min_{\delta} ||\delta||_p \ \ \  s.t.\ g^T \delta + \frac{1}{2} \delta^T \mH \delta \geq c,
\end{equation}

In the following discussion, we provide the lower bound on the minimal $||\delta^*||_2$ and $||\delta^*||_\infty$ needed to induce a loss increase of $c$ with respect to the properties of the loss function at weight $\mW$. The bounds on the magnitude of other $\ell_p$ norm bounded weight perturbations can be similarly derived from our result using the equivalence of norms in finite-dimensional spaces.

\begin{theorem}
\label{the:l2}
Assume that $v := \lambda_{max}(H) \geq 0$ as the largest eigenvalue of the Hessian, and $n := ||\mW||_0$ as the number of nonzero elements in $\mW$, we have
\begin{equation}
\label{eq:theL2}
    \frac{||g||_2}{v} \left( \sqrt{1+\frac{2vc}{||g||_2^2}} -1 \right) \leq ||\delta^*||_2,
\end{equation}
\begin{equation}
\label{eq:theLi}
    \frac{|g|}{nv} \left( \sqrt{1+\frac{2nvc}{|g|^2}} -1 \right) \leq ||\delta^*||_\infty.
\end{equation}
\end{theorem}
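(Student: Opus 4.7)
The plan is to push each term of the quadratic surrogate in (\ref{eq:obj}) through a norm inequality, reducing the constraint $g^T\delta + \tfrac12 \delta^T \mH \delta \geq c$ to a one-dimensional quadratic inequality in the single scalar $||\delta||_p$, which is then solved directly by the quadratic formula.

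First, I would upper-bound the linear term via H\"older's inequality: $g^T\delta \leq ||g||_2\,||\delta||_2$ in the $p = 2$ case, and $g^T\delta \leq ||g||_1\,||\delta||_\infty$ in the $p = \infty$ case, reading the statement's notation $|g|$ as $||g||_1$. For the quadratic term, the Rayleigh quotient inequality gives $\delta^T \mH \delta \leq v\,||\delta||_2^2$, since $v = \lambda_{max}(\mH) \geq 0$. In the $\ell_\infty$ case I would further apply the elementary embedding $||\delta||_2^2 \leq n\,||\delta||_\infty^2$ with $n = ||\mW||_0$, viewing $\delta$ as effectively supported on the nonzero coordinates of $\mW$ (perturbations off that support are irrelevant for both generalization and quantization and can be excluded without loss).

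Plugging these estimates into the side constraint yields, respectively,
\begin{equation*}
\tfrac{v}{2}\,||\delta||_2^2 + ||g||_2\,||\delta||_2 - c \;\geq\; 0
\qquad\text{and}\qquad
\tfrac{nv}{2}\,||\delta||_\infty^2 + |g|\,||\delta||_\infty - c \;\geq\; 0.
\end{equation*}
Each is a univariate quadratic in a nonnegative scalar $y = ||\delta||_p$ with positive leading coefficient and strictly negative constant term, so it has a unique nonnegative root below which the quadratic is strictly negative. Every feasible $\delta$, and in particular the minimizer $\delta^*$, must therefore satisfy $y \geq (-b + \sqrt{b^2 + 2ac})/a$, which I would rewrite in the algebraically equivalent form $(b/a)\bigl(\sqrt{1 + 2ac/b^2} - 1\bigr)$. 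Substituting $(a,b) = (v,\,||g||_2)$ reproduces (\ref{eq:theL2}) and $(a,b) = (nv,\,|g|)$ reproduces (\ref{eq:theLi}), completing the argument.

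The step requiring the most care is the $\ell_\infty$ estimate: the Hessian is naturally controlled in the $\ell_2$ metric, so moving to $||\delta||_\infty$ unavoidably introduces a dimension factor, and the linear term must be paired by H\"older with $||g||_1$. Choosing the effective dimension to be $n = ||\mW||_0$ rather than the ambient parameter count is the substantive modeling decision, and is exactly what produces the stated constants. Once the norm conversions are pinned down the remainder of the proof is mechanical.
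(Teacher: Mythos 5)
Your proposal matches the paper's own proof essentially step for step: H\"older/Cauchy--Schwarz on the linear term (with $|g|$ read as $\|g\|_1$ in the $\ell_\infty$ case), the Rayleigh-quotient bound $\delta^T \mH \delta \leq v\|\delta\|_2^2$, the embedding $\|\delta\|_2^2 \leq n\|\delta\|_\infty^2$, and then the quadratic formula on the resulting scalar inequality. Your explicit justification for taking $n = \|\mW\|_0$ (restricting $\delta$ to the support of $\mW$) is a small point the paper leaves implicit, but otherwise the arguments are the same.
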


\begin{proof}
Here we follow the proof derived by~\citet{moosavi2019robustness} to their Theorem 1.
For any perturbation $\delta$ with $||\delta||_2 = r$ satisfying the condition in Equation~(\ref{eq:obj}) with $p=2$, we have
\begin{equation}
\label{eq:L2}
    -c + ||g||_2 r + \frac{v}{2} r^2 \geq -c + g^T \delta + \frac{1}{2} \delta^T \mH \delta \geq 0.
\end{equation}
This is derived from the Cauchy-Schwartz Inequality $||g||_2 ||\delta||_2 \geq g^T \delta$ and the Min-max Theorem $v ||\delta||_2^2 \geq \delta^T \mH \delta$.

Solving the second-order inequality in Equation~(\ref{eq:L2}), and considering the fact that $r \geq 0$, we have
\begin{equation}
    \frac{||g||_2}{v} \left( \sqrt{1+\frac{2vc}{||g||_2^2}} -1 \right) \leq r,
\end{equation}
which also hold when $r = ||\delta^*||_2$, leading to Equation~(\ref{eq:theL2})

Similarly, for any perturbation $\delta$ with $||\delta||_\infty = r$ satisfying the condition in Equation~(\ref{eq:obj}) with $p=\infty$, we have
\begin{equation}
\label{eq:Li}
    -c + |g| r + \frac{v}{2} r^2 n \geq -c + g^T \delta + \frac{1}{2} \delta^T \mH \delta \geq 0.
\end{equation}
This is derived from the Cauchy-Schwartz Inequality $|g|\ ||\delta||_\infty \geq g^T \delta$, the Min-max Theorem $v ||\delta||_2^2 \geq \delta^T \mH \delta$ and the the equivalence of norms in finite-dimensional spaces $\sqrt{n} ||\delta||_\infty \geq ||\delta||_2$.

Solving the second-order inequality in Equation~(\ref{eq:Li}), and considering the fact that $r \geq 0$, we have
\begin{equation}
    \frac{|g|}{nv} \left( \sqrt{1+\frac{2nvc}{|g|^2}} -1 \right) \leq r,
\end{equation}
which also hold when $r = ||\delta^*||_\infty$, leading to Equation~(\ref{eq:theLi}).

\end{proof}

Note that the lower bounds in Equations~(\ref{eq:theL2}) and~(\ref{eq:theLi}) both monotonically increase with the decrease of $v$, i.e., a smaller Hessian eigenvalue. This implies that under second-order approximation, having small Hessian eigenvalues is beneficial in limiting the loss increase under $\ell_p$ bounded weight perturbation, therefore inducing better generalization and quantization performance.

Interestingly, note that the bound in Equation~(\ref{eq:theLi}) is also monotonically increasing with decreasing $|g|$, showing the effectiveness of the previously proposed gradient $\ell_1$ regularization~\cite{alizadeh2020gradient}. Meanwhile, even if we consider the case where gradient $\ell_1$ is fully optimized, i.e., $|g|\rightarrow 0$, we have the lower bound
\begin{equation}
    \lim_{|g|\rightarrow 0} \left[ \frac{|g|}{nv} \left( \sqrt{1+\frac{2nvc}{|g|^2}} -1 \right) \right] = \sqrt{\frac{2c}{nv}},
\end{equation}
which may still be small if the Hessian eigenvalue $v$ is large. This analysis unveils that optimizing gradient $\ell_1$ is inadequate for the model's robustness against quantization, while further minimizing Hessian eigenvalues provides a stronger guarantee.

\section{Hessian-enhanced Training}

\subsection{Regularizing Hessian Eigenvalues}
\label{ssec:reg}

Following the conclusion of Theorem~\ref{the:l2}, here we aim to propose a regularization term that can minimize the squared sum of the Hessian matrix $\mH$'s eigenvalues $\lambda_i$ to encourage all eigenvalues to be small, thus minimizing the maximum eigenvalue $v$. This leads to our regularizer formulation:
\begin{equation}
\label{eq:reg_raw}
    L_r = \sum_i \lambda_i^2 = \E_{\vz} ||\mH \vz||^2, \ \vz \sim \mathcal{N}(0,\mI).
\end{equation}

With a finite difference approximation of the Hessian, we have $\mH \vz \approx \frac{\nabla L(\mW+hz)-\nabla L(\mW)}{h}$, where $h$ is a small positive number. Note that sampling multiple $z$ from the Gaussian distribution to compute the expectation may be costly; thus, we follow the observation made in CURE~\cite{moosavi2019robustness}, where the regularization loss can be estimated by only focusing on selected directions leading to high curvature, which often occurs along the gradient direction, i.e., $z = \nabla L(\mW)$~\cite{fawzi2018empirical,moosavi2019robustness}.
Thus we can convert the regularization term in Equation~(\ref{eq:reg_raw}) into 
\begin{equation}
    L_r(\mW) = ||\nabla L(\mW+h \vz)-\nabla L(\mW) ||^2,\ \vz = \nabla L(\mW)
\end{equation}
where $h>0$ is a small parameter determining the step size of the perturbation, and the $\frac{1}{h^2}$ term can be omitted by absorbing into the regularization strength parameter.

For a DNN model, $L_r$ needs to be computed on the weight tensors from all the layers, each having distinct dimensions and gradient value ranges. To accommodate the diversity among layers, we propose to compute $L_r$ in a layer-wise fashion, and scale the $\ell_2$ norm of the perturbation $\vz$ to match the weight value range in each layer. Specifically, for layer $i$ we have
\begin{equation}
\begin{split}
  \label{eq:reg}
    L_r^i(\mW^i) = ||\nabla L(\mW^i+h \vz^i)-\nabla L(\mW^i) ||^2 &, \\
     \vz^i = \frac{\mW^{i^2}}{||\mW^i||_2} \frac{\nabla L(\mW^i)}{||\nabla L(\mW^i)||_2} &.  
\end{split}
\end{equation}

The overall Hessian regularization is therefore computed as $L_r(\mW) = \sum_{i=1}^N L_r^i(\mW^i)$, summing over all the $N$ layers in the model.

\subsection{Hessian-enhanced Robust Optimization}
\label{ssec:opt}

In order to minimize $L_r(\mW)$ during DNN training, we provide an efficient and effective method to compute the gradient of $L_r^i(\mW^i)$ with respect to $\mW^i$. We start with defining $G(\mU) := ||\nabla L(\mU)-\nabla L(\mW^i) ||^2$, which allow us to convert $\nabla L_r^i(\mW^i)$ to
\begin{equation}
\begin{split}
  \label{eq:grad}
    \nabla L_r^i(\mW^i) & = \nabla_{(\mW^i+h \vz^i)} G(\mW^i+h \vz^i) \cdot \nabla_{\mW^i} (\mW^i+h \vz^i) \\& \approx \nabla_{(\mW^i+h \vz^i)} G(\mW^i+h \vz^i).  
\end{split}
\end{equation}

With this conversion, our regularization can be optimized with only one additional back propagation on the gradient difference $G$ with respect to the perturbed weight $\mW^i+h \vz^i$, which is well supported by common deep learning libraries such as TensorFlow and PyTorch. Note that we discard the second-order term $\nabla_{\mW^i} (\vz^i)$ in the final derivation step, which has been proven to be an effective approximation by~\cite{foret2020sharpness}.

In the meantime, note that regularizing the Hessian eigenvalue is necessary yet insufficient for the robustness against generalization and quantization. Since the Hessian regularization only regularizes the second-order derivative but not the first-order one, the final ``optimum'' may end up on a flat but steep slope in the loss surface. Adding a first-order regularization on the gradient norm is needed to mitigate the problem and complete the robust optimization. However, directly adding the $\ell_p$ norm of the gradient to the overall loss function requires additional computation and an additional regularization strength parameter. So instead, we take inspiration from the previous sharpness-aware minimization (SAM) method~\cite{foret2020sharpness}, which shows replacing the gradient of the original weight $\nabla_{\mW^i} L(\mW^i)$ with the gradient of the perturbed weight $\nabla_{(\mW^i+h \vz^i)} L(\mW^i+h \vz^i)$ in the SGD process effectively serves as a first-order regularization on the gradient norm and loss sharpness. This replacement can be made without additional cost as we already have $\nabla_{(\mW^i+h \vz^i)} L(\mW^i+h \vz^i)$ computed in the computation of $L_r^i(\mW^i)$.

With the approximation in Equation~(\ref{eq:grad}) and the addition of the first-order regularization in the SGD process, we can derive the gradient of our Hessian-enhanced robust optimization as
\begin{equation}
\label{eq:grad_final}
    \nabla_{\mW^i} = \nabla_{(\mW^i+h \vz^i)} L(\mW^i+h \vz^i) + \alpha \mW + \gamma \sum_{i=1}^N \nabla_{(\mW^i+h \vz^i)} G(\mW^i+h \vz^i),
\end{equation}
where $\alpha>0$ denotes the weight decay and $\gamma>0$ denotes the regularization strength of the Hessian regularization.
Performing SGD optimization with the derived gradient $\nabla_\mW$ in Equation~(\ref{eq:grad_final}) leads to the HERO algorithm, as illustrated in detail in Algorithm~\ref{alg}. 
\vspace{-21pt}
\begin{algorithm}
\small
	\caption{Hessian-Enhanced Robust Optimization (HERO)} 
	\label{alg}
	\begin{algorithmic}[1]
		\State Randomly initialize model weights $\mW^i_0$ for all layer $i$;
		\State Set total step $T$, perturbation strength $h$, learning rate $\eta$; 
		\State Set weight decay $\alpha$ and Hessian regularization strength $\gamma$;
		\For {$t=0,\ldots,T$}
			    \State Sample batch $\mathcal{B}$ from training set;
			    \State Compute batch loss's gradient $g^i = \nabla L_{\mathcal{B}}(\mW^i_t)$;
			    \State Compute weight perturbation $\vz^i$ with $g^i$ per Equation~(\ref{eq:reg});
			    \State Weight perturbation $\mW^{i*} = \mW^i+h \vz^i$;
			    \State Compute perturbed gradient $\nabla L_{\mathcal{B}}(\mW^{i*})$ 
			    \State Hessian regularization $G(\mW^{i*}) = ||\nabla L_{\mathcal{B}}(\mW^{i*})-g^i||^2$
			    \State Compute HERO gradient $\nabla_{\mW^i}$ per Equation~(\ref{eq:grad_final});
			    \State Weight update $\mW^i_{t+1} = \mW^i_t - \eta \nabla_{\mW^i}$
		\EndFor
		\Return $\mW^i_T$ for all layer $i$
	\end{algorithmic} 
\end{algorithm}
\vspace{-15pt}

\section{Evaluation}
\subsection{Experiment Setup}
We evaluate HERO with three representative DNNs: ResNet20~\cite{he2016deep}, MobileNetV2~\cite{sandler2018mobilenetv2}, and VGG19 with batch normalization~(VGG19BN) \cite{simonyan2014very} on the CIFAR-10 and CIFAR-100 datasets~\cite{krizhevsky2009learning}. The parameter numbers of these networks are $0.27M$~(ResNet20), $2.30M$~(MobileNetV2), and  $20.04M$~(VGG19BN). We further evaluate HERO with ResNet18~\cite{he2016deep} using the ImageNet dataset~\cite{deng2009imagenet} to validate the scalability of our method. The parameter number of ResNet18 is $11.17M$. We compare our approach with the stochastic gradient descent~(SGD) and Gradient $\ell 1$ (GRAD L1)~\cite{alizadeh2020gradient} training methods. We include GRAD L1 as a baseline because it is by far the state-of-the-art regularization method towards quantization robustness, yet only uses the first-order information of the quantization loss, in contrast to the second-order information used by HERO. 

All methods utilize a cosine learning rate scheduler with an initial learning rate~$\eta$ of $0.1$. We set the momentum as $0.9$ and the weight decay~$\alpha$ as $10^{-4}$. For the CIFAR-10 and CIFAR-100 experiments, we apply basic data augmentations, such as random crop, padding, and random horizontal flip on the training set, and train the model for $200$ epochs with batch size $128$. For the ImageNet experiments, random resized crop and normalization are applied to the training set. We train the model for $100$ epochs with batch size $256$. Note that we train the model from scratch in all the experiments. All experiments are conducted using NVIDIA TITAN RTX GPUs.

For HERO, to select the Hessian regularization strength~$\gamma$, we conduct a grid search over $\{0.01, 0.05, 0.1, 0.5, 1.0, 5.0\}$.
For the weight perturbation step size~$h$, we follow the previous experiment settings in~\cite{foret2020sharpness} to utilize $0.5$ for CIFAR-10 experiments and $1.0$ for other experiments.
For the GRAD L1 regularization strength, we follow the steps in~\cite{alizadeh2020gradient} to run a grid search to find the best hyperparameter with the minimal sacrifice of the test accuracy.

\subsection{Improving Model Generalization}
\label{ssec:generalize}

As discussed in Theorem~\ref{the:gen} and Equation~(\ref{eq:theL2}), HERO is beneficial on limiting the loss increase under $\ell_2$ bounded weight perturbation, thus realizing better generalization performance. In this subsection, we showcase HERO's effectiveness in improving model generalizability with experiments on the test accuracy comparison and the noisy-label training performance.

\begin{table}[t]
\caption{Test accuracy on various models and datasets.}
\vspace{-9pt}
\label{table:generalization}
\begin{tabular}{c c c c c}
\toprule
Dataset                   & Model       & HERO    & GRAD L1   & SGD     \\ \midrule
\multirow{3}{*}{CIFAR-10}  & ResNet20    & \textbf{93.44\%} & 92.82\% & 92.82\% \\ 
                          & MobileNetV2 & \textbf{95.03\%} & 92.52\% & 92.45\% \\
                          
                          & VGG19BN & \textbf{94.79\%} & 93.41\% & 93.89\% \\
                          \hline
\multirow{3}{*}{CIFAR-100} & ResNet20    & \textbf{70.72\%} & 69.30\% & 69.52\% \\ 
                          & MobileNetV2 & \textbf{76.90\%} & 74.13\% & 73.12\% \\
                          
                          & VGG19BN & \textbf{76.09\%} & 74.05\% & 74.61\% \\
            \hline
ImageNet & ResNet18 & \textbf{71.05\%} & 70.82\% & 70.74\% \\ \bottomrule
\end{tabular}
\vspace{-15pt}
\end{table}
\noindent\textbf{Test Accuracy.} 
We evaluate the test accuracy of HERO and baseline methods in Table~\ref{table:generalization}. 
For ResNet20, MobileNetV2 and VGG19BN models, HERO achieves $0.62\%$, $2.58\%$ and $0.90\%$ accuracy gain compared with SGD on CIFAR-10 dataset respectively. 
For experiments on the CIFAR-100 dataset, HERO can reach the accuracy of $70.72\%$, $76.90\%$, and $76.09\%$ in ResNet20, MobileNetV2, and VGG19BN network, which increases the performance by $1.20\%$, $3.78\%$, and $1.48\%$ with respect to SGD.
One thing worth noting is that HERO enables a better test accuracy on compact models without enlarging the network size. For instance, on the CIFAR-10 and CIFAR-100 dataset, the MobileNetV2 test accuracy achieved by HERO can outperform the VGG19 test accuracy achieved by SGD, with $\sim 8.7\times$ fewer parameters. This further benefits the deployment of efficient models in the real world.

On the contrary, we find GRAD\ L1 method, which is designed against $\ell_\infty$ bounded weight perturbation, doesn't guarantee a consistent improvement of the test accuracy against SGD. This implies that generalizing the robustness against $\ell_\infty$ bounded to $\ell_2$ bounded weight perturbation isn't trivial. On the other hand, HERO provides both consistent promising generalization performance and robustness against quantization, as further discussed in Section~\ref{ssec:quant}.

To further validate the scalability of HERO, we test with the ResNet18 model on ImageNet. The result confirms that HERO can improve the generalization compared to GRAD L1 and SGD.
\begin{table}[t]
\caption{Test accuracy under noisy-label training.}
\vspace{-12pt}
\label{table:noisy-label}
\centering
(a) ResNet20\\
\begin{tabular}{c c c c c}
\toprule
Noise ratio & 20\%  & 40\%  & 60\%  & 80\%  \\ \midrule
HERO                              & \textbf{90.63\%} & \textbf{88.71\%} & \textbf{84.61\%} & \textbf{72.11\%} \\ 
GRAD L1                             & 85.91\% & 78.66\% & 65.86\% & 48.28\% \\ 
SGD                               & 85.64\% & 78.73\% & 66.42\% & 42.17\% \\ \bottomrule
\end{tabular}\\
\vspace{4pt}
\centering
(b) MobileNetV2\\
\begin{tabular}{c c c c c}
\toprule
Noise ratio & 20\%    & 40\%    & 60\%  & 80\%  \\ \midrule
HERO                              & \textbf{91.70\%} & \textbf{88.57\%} & \textbf{81.73\%} & \textbf{72.03\%} \\ 
GRAD L1                             & 89.00\%   & 85.56\%   & 79.73\% & 30.34\% \\ %\hline
SGD                               & 89.28\%   & 85.84\%   & 80.49\% & 62.91\% \\ \bottomrule
\end{tabular}
\vspace{-12pt}
\end{table}
\noindent\textbf{Noisy-Label Training.} For models trained on real-world data, inevitable label noise will exist in the training dataset. Robustness against noisy labels in the training process is essential for the model's generalizability to the test data. Here we show that HERO is still robust under the presence of noisy labels.

We utilize ResNet20 and MobileNetV2 networks on the CIFAR-10 dataset for this experiment. First, we follow the symmetric noisy label generation in~\cite{li2020dividemix}, where we uniformly sample a certain proportion~(from $20\%$ to $80\%$, namely \textit{noise ratio}) of the training data and replace their labels with a uniform random sample from all the possible classes. We then train the model with the same training procedure on the perturbed training set, and evaluate the accuracy on the original clean test set. 
As shown in Table~\ref{table:noisy-label}, HERO has the best test accuracy across all noise ratios among all three methods. Besides, the test accuracy of GRAD L1 and SGD drops dramatically at the high noise ratio of $80\%$; while the HERO approach still provides acceptable results. Therefore, HERO shows its robustness against the training label perturbation and achieve the best performance under noisy training label among all methods.
\subsection{Improving Quantization Robustness}
\label{ssec:quant}
In Theorem~\ref{the:quant} and Equation~(\ref{the:quant}), we show that the loss change of uniform weight quantization is bounded by the model performance under a weight perturbation $\delta$ bounded by its $\ell_{\infty}$ norm, where lower quantization precision indicates a higher weight perturbation. 
Here we demonstrate the quantization robustness achieved by HERO with the post-training quantization to various precision. No quantization-aware finetuning is performed in these experiments.
\begin{figure}[t]
\centering
\includegraphics[width=\linewidth]{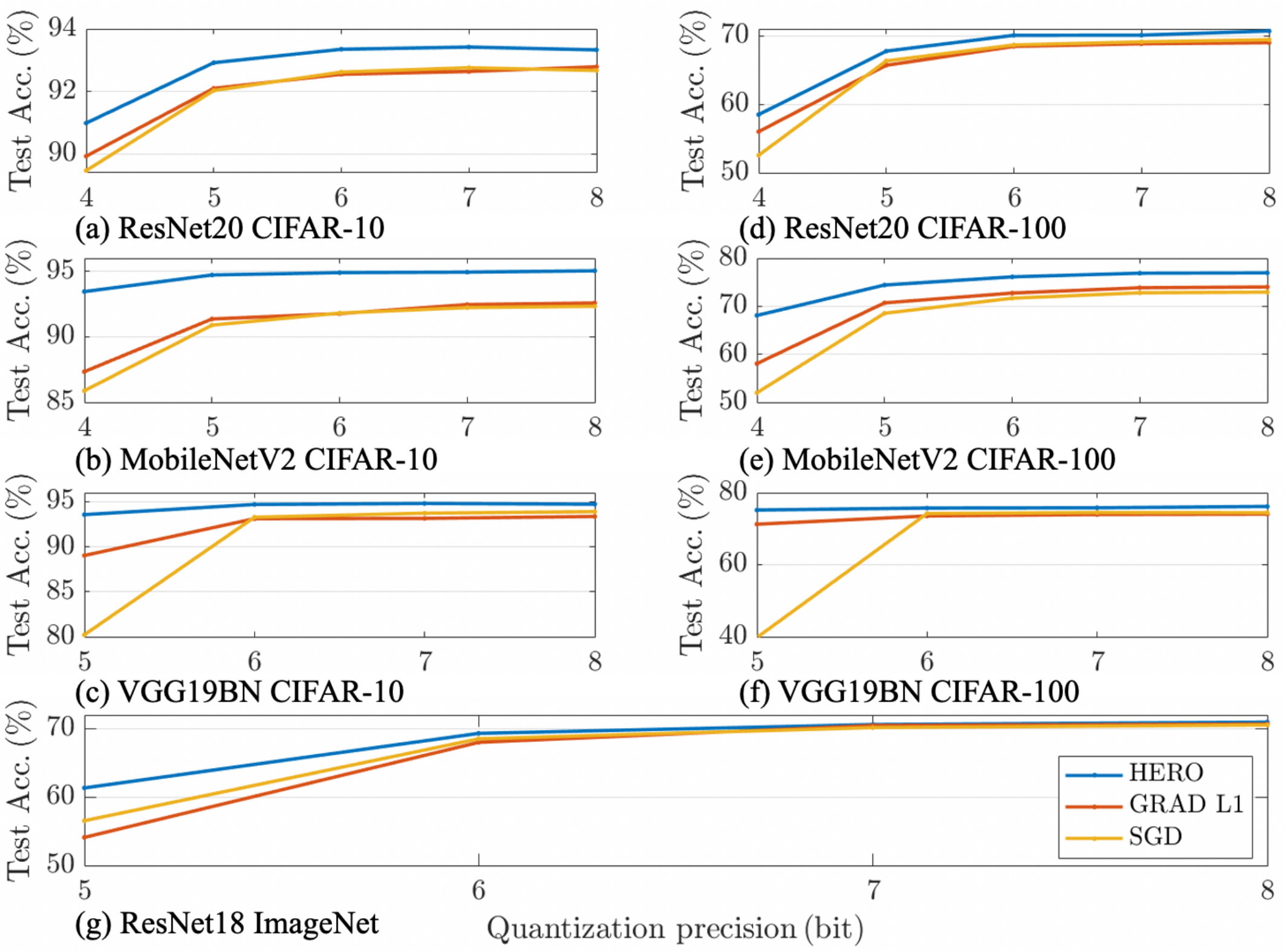}
\vspace{-18pt}
\caption{Post-training quantization accuracies with HERO, GRAD L1, and SGD: (a)-(c) ResNet20, MobileNetV2, VGG19BN experiments on CIFAR-10 dataset; (d)-(f) ResNet20, MobileNetV2, VGG19BN experiments on CIFAR-100 dataset; (g) ResNet18 experiments on ImageNet dataset.}
\label{fig:quantization}
\vspace{-12pt}
\end{figure}

The experiments on the CIFAR-10 dataset are shown in Figure~\ref{fig:quantization}~(a)-(c). The test accuracy for HERO across different quantization precision is consistently higher than that of GRAD L1 and SGD. Our observation matches with~\cite{alizadeh2020gradient} that GRAD\ L1 can achieve better test accuracy to some extent under low weight precision compared with SGD. 
Yet, the second-order regularization introduced by HERO provides a better guarantee of quantization robustness. 

More significantly, the HERO performance under low quantization precision shows a large improvement compared with baselines across all the precision. For instance, for the MobilenetV2 network, test accuracy for HERO under 4-bit weight is $93.45\%$, significantly higher than the $87.34\%$ and $85.88\%$ achieved by GRAD L1 and SGD, respectively. DNN quantization with ultra-low precision is a challenging problem due to large perturbations on the weights, while HERO effectively provides robustness against such perturbation.

We also notice that a model with more parameters is more sensitive to quantization perturbation. In our case of the VGG19BN network, SGD with 5-bit quantization already leads to noticeable accuracy degradation compared to full precision results. 
In the meantime, HERO still retains a $93.57\%$ test accuracy compared to the $89.03\%$ and $80.22\%$ accuracy of GRAD L1 and SGD, showing its effectiveness on larger models. 

A similar trend can also be observed on other datasets.
On the CIFAR-100 dataset, as shown in Figure~\ref{fig:quantization}~(d)-(f), the consistent trend that HERO outperforms GRAD\ L1 and SGD still holds across different quantization precision. Besides, in the low precision setting, HERO has an outstanding performance gain compared with baseline methods.
For instance, on the MobileNetV2 network, HERO improves the test accuracy under 4-bit quantization by $10.05\%$ and $16.10\%$ compared with GRAD\ L1 and SGD, respectively. Our quantization result with ResNet18 on ImageNet dataset also shows that HERO can provide better quantization robustness across different quantization precision, as shown in Figure~\ref{fig:quantization}~(g).
\subsection{Theoretical Insight Verification}
\begin{figure}[t]
\centering
\includegraphics[width=0.98\linewidth]{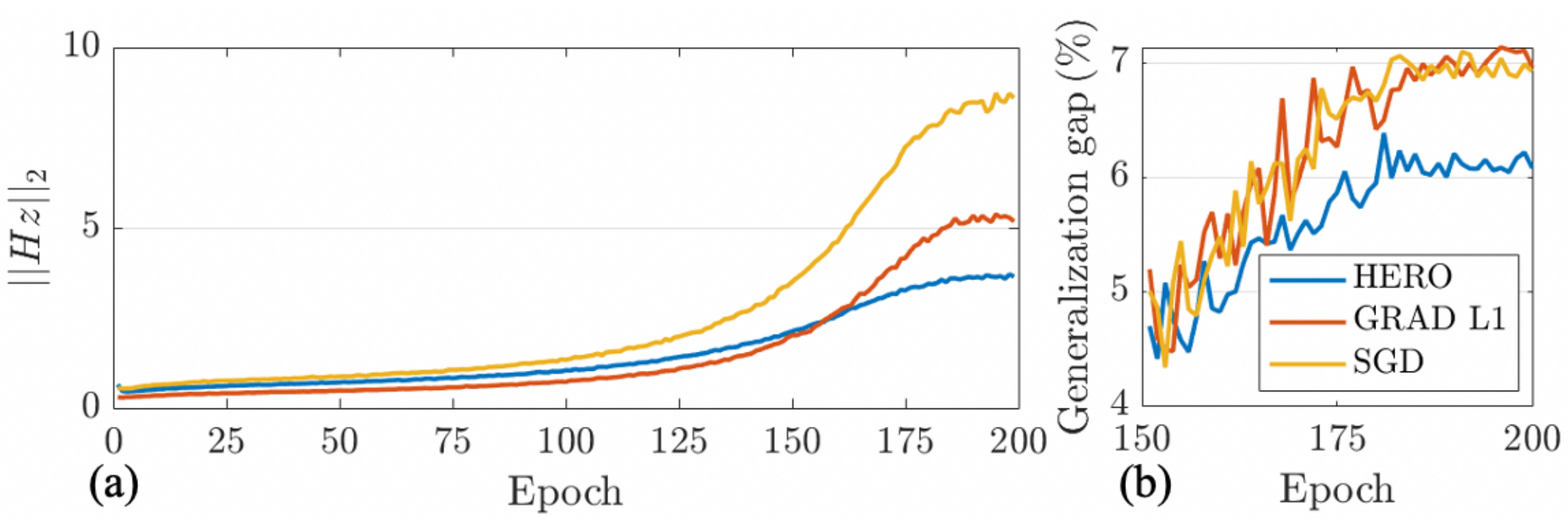}
\vspace{-10pt}
\caption{Hessian norm and generalization gap evolution through the training with HERO, GRAD\ L1, and SGD.}
\vspace{-10pt}
\label{fig:hessian}
\end{figure}
\noindent\textbf{Hessian Norm across Training Process.}
To show the effectiveness of HERO on regularizing Hessian eigenvalues, we visualize the evolution of the Hessian norm $||\mH \vz||_2$ throughout the training process in Figure~\ref{fig:hessian}~(a) following the setting in~\cite{moosavi2019robustness}, with $\vz$ being the perturbation defined in Equation~(\ref{eq:reg}). All curves are averaged over the entire CIFAR-10 training set. 
The generalization gap between training and test accuracy in the final 50 training epochs is shown in Figure~\ref{fig:hessian}~(b).
Throughout the training process, the Hessian norm gets larger as the model overfits to the training set. Meanwhile, HERO helps keep the Hessian norm values at the lowest level towards the end of the training process, and thus leads to the lowest generalization gap as expected.

\noindent\textbf{Loss Contour Visualization.}
\begin{figure}[t]
\centering
\includegraphics[width=0.99\linewidth]{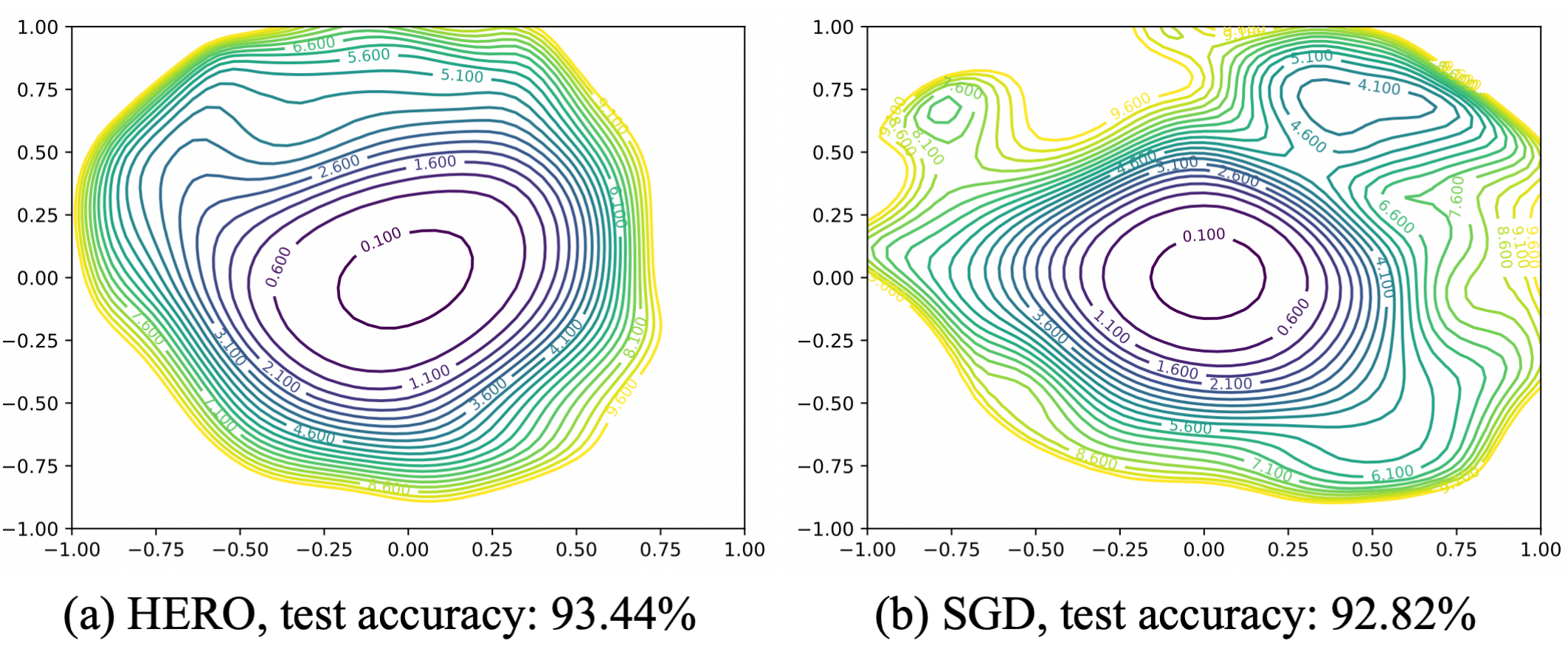}
\vspace{-12pt}
\caption{Loss surface contour along 2 random directions around converged weights. Estimated on ResNet20 model on CIFAR-10 dataset trained with HERO and SGD.}
\label{fig:2d}
\end{figure}
We further showcase the weight perturbation robustness achieved by HERO with the loss contour in the neighborhood of the converged weights, with HERO in Figure~\ref{fig:2d}~(a) and SGD in Figure~\ref{fig:2d}~(b) plotted under the same scale. The contour is generated with the visualization tool provided by~\cite{li2017visualizing}, which monitors the loss change while applying normalized adjustments to the weight along two random directions. Compared to that of SGD, the loss surface of HERO appears to be smoother, with a larger region within the inner contour circle indicating a 0.1 loss increase. This shows HERO is robust to larger perturbation within a tolerance of loss increase, which is well in line with Theorem~\ref{the:l2}.  

\begin{table}[t]
\caption{Ablation study on HERO, first-order only, and SGD gradient update rule. Results reported with MobileNetV2 network on CIFAR-10 dataset.}
\vspace{-9pt}
\label{table:ablation}
\centering
\begin{tabular}{c c c c c}
\toprule
Quantization (bit)  & 4       & 6       & 8       & Full      \\ \midrule
HERO                & \textbf{93.45\%} & \textbf{94.90\%} & \textbf{95.03\%} & \textbf{95.03\%} \\ 
First-order only    & 91.61\% & 93.92\% & 94.00\% & 94.06\% \\ 
SGD                 & 85.88\% & 91.81\% & 92.33\% & 92.45\% \\ \bottomrule
\end{tabular}
\vspace{-10pt}
\end{table}
\noindent\textbf{The Necessity of Hessian-enhanced Method.}
In the derivation of HERO's gradient in Equation~(\ref{eq:grad_final}), we borrow the first-order regularization~($ \nabla_\mW = \nabla_{(\mW^i+h \vz^i)} L(\mW^i+h \vz^i)$) from SAM~\cite{foret2020sharpness} alongside our Hessian regularization. SAM itself already leads to the state-of-the-art result on generalization performance~\cite{foret2020sharpness}, beating dropout~\cite{srivastava2014dropout} and Mixup~\cite{zhang2017mixup}. Here we show the Hessian regularization introduced by HERO is still necessary, as it further increases the generalization and quantization performance over SAM.
We compare HERO with the first-order only method~(i.e., $ \nabla_{\mW^i} = \nabla_{(\mW^i+h \vz^i)} L(\mW^i+h \vz^i)+\alpha \mW$) and SGD~(i.e., $ \nabla_{\mW^i} = \nabla_{\mW^i} L(\mW^i)+\alpha \mW$) in Table~\ref{table:ablation}. For test accuracy on the full precision model, HERO provides an additional $1\%$ gain over the first-order only method. 
Furthermore, HERO provides better robustness against quantization.
For example, 4-bit weight quantization with the HERO model leads to a $1.6\%$ accuracy drop, much smaller than the $2.5\%$ drop achieved with the first-order regularization. The result confirms the necessity of including the Hessian regularization in the pursuit of both generalization and quantization performance.

\section{Conclusion}

This work proposes HERO, a Hessian-enhanced robust optimization method to improve the generalization and quantization performance of DNN models simultaneously. We provide novel insights on unifying generalization and quantization under improving weight perturbation robustness, theoretical analysis on enhancing the robustness with Hessian regularization, and empirical results showing the effectiveness of HERO. We hope this work helps on deploying DNN models onto real-world mobile and edge devices, and inspires further attention to the robustness against weight perturbation.

\bibliographystyle{ACM-Reference-Format}
\bibliography{main}
\end{document}